\newcommand{\mohammad}[1]{\textcolor{blue}{(Mo: #1)}}
\newcommand{\dale}[1]{\textcolor{green}{(Dale: #1)}}
\newcommand{\comment}[1]{}
\newcommand{\argmax}[1]{\underset{#1}{\operatorname{argmax}}}
\newcommand{\logsumexp}[0]{\mathrm lse}
\def\etal{{\em et al.}}
\newcommand{\extra}[1]{#1}
\newcommand{\extracite}[1]{#1}
\newcommand{\supp}[0]{Appendix~\ref{app:rml-proofs}}
\renewcommand{\vec}[1]{\boldsymbol{\mathbf{#1}}}
\def\btheta{\vec{\theta}}
\def\bx{\vec{x}}
\def\by{\vec{y}}
\def\calF{\mathcal{F}}
\def\calY{\mathcal{Y}}
\def\Real{\mathbb{R}}
\def\targetr{r}               % target logits
\def\targetp{q}               % target prob
\def\modelp{{p}_{\theta}} % model prob
\def\anyr{{s}}            % generic logits
\def\anyp{{p}}            % generic prob
\def\midr{\smallfrac{q+p}{2}}                  % midpoint logits
\def\altr{a}                  % alternate logits
\def\altaltr{b}               % alternate2 logits
\def\domain{\calF}            % bregman domain
\def\range{\calF^*}           % bregman range
\def\softmax{f^*}
\def\eps{\epsilon}
\def\bys{{\mathbf y}^*}
\def\byp{{\mathbf y}} % \widetilde
\def\temp{\tau}
\def\lossml{\mathcal{L}_{\mathrm{ML}}}
\def\lossrml{\mathcal{L}_{\mathrm{RAML}}}
\def\lossrl{\mathcal{L}_{\mathrm{RL}}}
\def\dataset{\mathcal{D}}
\def\diag{\mathrm{Diag}}
\def\int{\mathrm{int}}
\DeclareMathOperator{\E}{\mathbb{E}}
\newcommand{\reward}[2]{r(#1, #2)}
\newcommand{\kl}[2]{D_{\mathrm{KL}}\left(#1~\Vert~#2\right)}
\newcommand{\dv}[3]{D_{#1}\left(#2~\Vert~#3\right)}
\newcommand{\df}[2]{\dv{F}{#1}{#2}}
\newcommand{\dfs}[2]{\dv{F^*}{#1}{#2}}
\newcommand{\dfst}[2]{\dv{F^*_\temp}{#1}{#2}}
\newcommand{\ent}[1]{\mathbb{H}\left(#1\right)}
\newcommand{\inner}[2]{\langle#1,#2\rangle}
\newcommand{\trans}[1]{{#1}^{\ensuremath{\mathsf{T}}}}
\newcommand{\smallfrac}[2]{{\textstyle \frac{#1}{#2}}}
\def\eg{{\em e.g.}}
\def\ie{{\em i.e.}}
\newcounter{prop}
\newtheorem{proposition}[prop]{Proposition}
\newcommand{\tabref}[1]{Table~\ref{#1}}
\newcommand{\figref}[1]{Figure~\ref{#1}}
\title{Reward Augmented Maximum Likelihood\\
for Neural Structured Prediction}
\author{
\begin{tabular}{p{1.4cm}ccccccp{1.4cm}}
\multicolumn{2}{p{3.2cm}}{\centering Mohammad Norouzi} &
\multicolumn{2}{p{2.6cm}}{\centering Samy Bengio} & 
\multicolumn{2}{p{2.6cm}}{\centering Zhifeng Chen} &
\multicolumn{2}{p{3.2cm}}{\centering Navdeep Jaitly}\\[.1cm]
& \multicolumn{2}{p{2.8cm}}{\centering Mike Schuster} &
\multicolumn{2}{p{2.8cm}}{\centering Yonghui Wu} & 
\multicolumn{2}{p{2.8cm}}{\centering Dale Schuurmans} &
\end{tabular}\\[.4cm]
~\texttt{\{mnorouzi,\,bengio,\,zhifengc,\,ndjaitly\}@google.com}\\
~\texttt{\{schuster,\,yonghui,\,schuurmans\}@google.com}\\[.2cm]
Google Brain
}
\begin{document}
% \nipsfinalcopy is no longer used

\maketitle

\begin{abstract}

A key problem in structured output prediction is direct optimization of the task reward function that matters for test evaluation. This paper presents a simple and computationally efficient approach to incorporate task reward into a maximum likelihood framework. By establishing a link between the log-likelihood and expected reward objectives, we show that an optimal regularized expected reward is achieved when the conditional distribution of the outputs given the inputs is proportional to their exponentiated scaled rewards. Accordingly, we present a framework to smooth the predictive probability of the outputs using their corresponding rewards. We optimize the conditional log-probability of augmented outputs that are sampled proportionally to their exponentiated scaled rewards. Experiments on neural sequence to sequence models for speech recognition and machine translation show notable improvements over a maximum likelihood baseline by using reward augmented maximum likelihood (RAML), where the rewards are defined as the negative edit distance between the outputs and the ground truth labels.

\end{abstract}

\vspace{-.2cm}
\section{Introduction}
\vspace{-.2cm}

Structured output prediction is ubiquitous in machine learning.
Recent advances in natural language processing, machine translation,
and speech recognition hinge on the development of better
discriminative models for structured outputs and sequences.  The
foundations of learning structured output models were established by
the seminal work on conditional random fields
(CRFs)~\cite{laffertyetal01} \extra{and variants~\cite{lecunetal98},}
and structured large margin
methods~\cite{\extracite{tsochantaridisetal05,}taskar2004}, which
demonstrate how generalization performance can be significantly
improved when one considers the joint effects of the predictions
across multiple output components. These models have evolved into
their deep neural
counterparts~\cite{sutskeveretal14,andor2016globally} through the use
of recurrent neural networks (RNN) with LSTM~\cite{hochreiter1997long}
\extra{and GRU~\cite{cho2014learning} }cells and attention
mechanisms~\cite{bahdanau2014neural}.

A key problem in structured output prediction has always been to
enable direct optimization of the task reward (loss) used for test
evaluation. For example, in machine translation one seeks better BLEU
scores, and in speech recognition better word error rates. Not
surprisingly, almost all task reward metrics are not differentiable,
hence hard to optimize. Neural sequence
models~(\eg~\cite{sutskeveretal14,bahdanau2014neural}) optimize
conditional log-likelihood, \ie~the conditional log-probability of the
ground truth outputs given corresponding inputs.  These models do not
explicitly consider the task reward during training, hoping that
conditional log-likelihood serves as a good surrogate for the task
reward. Such methods make no distinction between alternative incorrect
outputs: log-probability is only measured on the ground truth
input-output pairs, and all alternative outputs are equally penalized
through normalization, whether near or far from the ground truth
target. We believe one can improve upon maximum likelihood~(ML)
sequence models if the difference in the rewards of alternative
outputs is taken into account.

\comment{Some previous work also suggests
so~\cite{taskar2004,ranzatoetal15,hintonetal15,lopezpazetal16}.}

Standard ML training, despite its limitations, 
has enabled the training of deep RNN models,
leading to revolutionary advances in machine translation
\cite{sutskeveretal14,bahdanau2014neural,luong2015effective} and
speech
recognition~\cite{chan2015listen,chorowski2014end,chorowski15\comment{,dario}}.
A key property of ML training for locally normalized RNN models is
that the objective function factorizes into individual loss terms,
which could be {\em efficiently} optimized using stochastic gradient
descend~(SGD). This training procedure does not require any form of
inference or sampling from the model during training, leading to
computational efficiency and ease to implementation. By contrast,
almost all alternative formulations for training structure prediction
models require some form of inference or sampling from the model at
training time which slows down training, especially for deep RNNs
(\eg~see large margin, search-based~\cite{daumeetal09,
wiseman2016sequence}\comment{,bahdanauetal15}, and expected risk
optimization methods).

\comment{
RNNs~(\eg~).  Such methods incorporate some task reward
approximation during training, but the behavior of the approximation
is not well understood, especially for deep neural nets. Moreover,
these methods require some form of inference at training time, which
slows training.

methods incorporate task loss (negative reward) during training by
exploiting a continuous upper bound on empirical loss. While the
tightness of the bound is questionable, especially for deep networks,
these}

\comment{
Despite alternative approaches,
It is surprising that such an approach has significantly improved the
state of the art, even when measured with respect to task specific
rewards, since the training criterion decomposes over individual
components and 

no distinction is made between alternative incorrect
structures: likelihood is only measured on the target training
structure, and all alternative structures are penalized equally,
whether near or far from the target.  This appears to be a
fundamentally limited approach, and recent work has also shown that
more information can be gained by considering the differing rewards
between alternatives than by just considering a single best target
\cite{hintonetal15,lopezpazetal16\extracite{,vapnikizmailov15}}.
}

Our work is inspired by the use of reinforcement learning (RL)
algorithms, such as policy gradient~\cite{williams92}, to optimize
expected task
reward~\cite{ranzatoetal15\extracite{,bahdanau2016actor}}.  Even
though expected task reward seems like a natural objective, direct
policy optimization faces significant challenges: unlike ML, a
stochastic gradient given a mini-batch of training examples is
extremely noisy and has a high variance; gradients need to be
estimated via sampling from the model, which is a non-stationary
distribution; the reward is often sparse in a high-dimensional output
space, which makes it difficult to find any high value predictions,
preventing learning from getting off the ground; and, finally,
maximizing reward does not explicitly consider the supervised labels,
which seems inefficient.  In fact, all previous attempts at direct
policy optimization for structured output prediction have started by
bootstrapping from a previously trained ML
solution \cite{ranzatoetal15\extracite{,bahdanau2016actor},silveretal16},
using several heuristics and tricks to make learning stable.

\comment{
but they only consider a convex upper bound on an otherwise
non-convex task loss, while also requiring that loss-augmented
inference be performed during training.

In fact, explicitly considering task reward has been a central
challenge of structured output prediction from its beginning.
Motivated by the limitations of maximum likelihood training,
there have been a number of recent attempts 
to directly maximize expected task reward
by applying reinforcement learning techniques. 
}

This paper presents a new approach to task reward optimization that
combines the computational efficiency and simplicity of ML with the
conceptual advantages of expected reward maximization. Our algorithm
called {\em reward augmented maximum likelihood (RAML)} simply adds a
sampling step on top of the typical likelihood objective. Instead of
optimizing conditional log-likelihood on training input-output pairs,
given each training input, we first sample an output proportionally to
its exponentiated scaled reward. Then, we optimize log-likelihood on
such auxiliary output samples given corresponding inputs. When the
reward for an output is defined as its similarity to a ground truth
output, then the output sampling distribution is peaked at the
ground truth output, and its concentration is controlled by a
temperature hyper-parameter.

\comment{
As we discuss
below, our learning algorithm is tightly related to policy gradient
methods (\eg~\cite{williams92}), but it requires sampling from a
stationary distribution {\em vs.} model distribution.
}

Our theoretical analysis shows that the RAML and regularized expected
reward objectives optimize a KL divergence between the exponentiated
reward and model distributions, but in opposite directions. Further, we
show that at non-zero temperatures, the gap between the two criteria
can be expressed by a difference of variances measured on
interpolating distributions.  This observation reveals how entropy
regularized expected reward can be estimated by sampling from
exponentiated scaled rewards, rather than sampling from the model
distribution. \comment{, which is the basis for our algorithm}

\comment{
The benefit of this approach is that we retain a principled connection
to maximizing expected reward by training the model with 
standard supervised learning updates.

The training output structures are only sampled from a distribution that
is focused around the target output structure,
bypassing the intrinsic hardness of performing reward augmented inference
to find optimal output structures under a non-decomposable task reward.
}

Remarkably, we find that the RAML approach achieves significantly
improved results over state of the art maximum likelihood RNNs. We
show consistent improvement on both speech recognition (TIMIT dataset)
and machine translation (WMT'14 dataset), where output sequences are
sampled according to their edit distance to the ground truth
outputs. Surprisingly, we find that the best performance is achieved
with output sampling distributions that shift a lot of the weight away
from the ground truth outputs. In fact, in our experiments, the
training algorithm rarely sees the original unperturbed outputs.  Our
results give further evidence that models trained with imperfect
outputs and their reward values can improve upon models that are only
exposed to a single ground truth output per
input \cite{hintonetal15,lopezpazetal16\extracite{,vapnikizmailov15}}.

\comment{
a more focused
distribution around the supervised target defined by exponentiated
scaled rewards.  

We exploit this connection to develop a new training
method where training structures are drawn by randomly editing a given
target structure, weighted by their exponentiated scaled reward.  

The
parameters of a deep RNN model can then be updated by optimizing
weighted likelihood on the alternative structures.
}

\vspace*{-.1cm}
\section{Reward augmented maximum likelihood}
\vspace*{-.2cm}
\label{sec:method}

Given a dataset of input-output pairs, $\dataset \equiv \{(\bx^{(i)},
\by^{*(i)})\}_{i=1}^N$, structured output models learn a parametric score
function $\modelp(\byp \mid \bx)$, which scores different output
hypotheses, $\byp \in \calY$.  We assume that the set of possible
output, $\calY$ is finite, \eg~English sentences up to a maximum
length. In a probabilistic model, the score function is normalized,
while in a large-margin model the score may not be normalized. In
either case, once the score function is learned, given an input $\bx$,
the model predicts an output $\widehat{\by}$ achieving maximal score,
\begin{equation}
\widehat{\by}(\bx) = \argmax{\byp}
~\modelp(\byp \mid \bx)~.
\end{equation}
If this optimization is intractable, approximate inference (\eg~beam
search) is used. We use a reward function $\reward{\by}{\bys}$ to
evaluate different proposed outputs against ground-truth outputs. Given a test
dataset $\dataset'$, one computes $\sum_{(\bx,\bys) \in \dataset'}
\reward{\widehat{\by}(\bx)}{\bys}$ as a measure of empirical reward. 
Since models with larger empirical reward are preferred, ideally one hopes
to maximize empirical reward during training.

However, since empirical reward is not amenable to numerical
optimization, one often considers optimizing alternative
differentiable objectives. 
The maximum likelihood (ML) framework tries to
minimize negative log-likelihood of the parameters given the data,
\begin{equation}
\lossml(\btheta; \dataset) = \sum_{(\bx,\bys) \in \dataset} -\log \modelp(\bys \mid \bx)~.
\label{eq:lossml}
\end{equation}
Minimizing this objective increases the conditional probability of
the target outputs, $\log \modelp(\bys \mid \bx)$, while decreasing
the conditional probability of alternative incorrect outputs. According
to this objective, all negative outputs are equally wrong, and
none is preferred over the others.

\comment{
ML training equally applies to models that
define $\modelp$ based on locally normalized distributions
(\eg~\cite{sutskeveretal14}) (\ie~directed graphical models), or
globally normalized models such as CRF (\ie~undirected graphical
models). Below, we will explain how we generalize ML to incorporate a
reward or loss function into account.
}

By contrast, reinforcement learning (RL) advocates optimizing
expected reward (with a maximum entropy regularizer~\cite{williams1991function\extracite{,mnih2016asynchronous}}), which is
formulated as minimization of the following objective,
\begin{equation}
\lossrl(\btheta; \temp, \dataset) = \sum_{(\bx,\bys) \in \dataset} \bigg\{
- \temp\ent{\modelp(\byp \mid \bx)} -\sum_{\by \in \calY} \modelp(\byp \mid
\bx)~\reward{\byp}{\bys} \bigg\},
\label{eq:lossrl}
\end{equation}
where $\reward{\byp}{\bys}$ denotes the reward function, \eg~negative
edit distance or BLEU score, $\temp$ controls the degree of
regularization, and $\ent{p}$ is the entropy of a distribution $p$,
\ie~$\ent{p(\byp)} = -\sum_{\byp \in \calY}p(\byp) \log p(\byp)$. It
is well-known that optimizing $\lossrl(\btheta; \temp)$ using SGD is
challenging because of the large variance of the gradients. Below we
describe how ML and RL objectives are related, and propose a hybrid
between the two that combines their benefits for supervised learning.

Let us define a distribution in the output space, termed the {\em
  exponentiated payoff distribution}, that is central in linking ML
  and RL objectives:
\begin{equation}
\targetp(\byp \mid \bys; \temp) = \frac{1}{Z(\bys, \temp)} \exp{ \left\{
  \reward{\byp}{\bys} /\temp \right\} }~,
\label{eq:payoff}
\end{equation}
where $Z(\bys, \temp) = \sum_{\byp \in \calY} \exp{ \left\{
  \reward{\byp}{\bys} /\temp \right\} }$. One can verify that the
global minimum of $\lossrl(\btheta; \temp)$, \ie~the optimal regularized
expected reward, is achieved when the model distribution matches
the exponentiated payoff distribution, \ie~$\modelp(\byp
\mid \bx) = \targetp(\byp \mid \bys; \temp)$. To see this, we
re-express the objective function in \eqref{eq:lossrl} in terms of a
KL divergence between $\modelp(\byp \mid \bx)$ and $\targetp(\byp \mid
\bys; \temp)$,
\begin{equation}
\sum_{(\bx,\bys) \in \dataset} \kl{\modelp(\byp \mid \bx)}{\targetp(\byp \mid
  \bys; \temp)} = \frac{1}{\,\temp\,}\lossrl(\btheta; \temp) + \mathrm{constant}~,
\label{eq:RL-kl} 
\end{equation}
where the $\mathrm{constant}$ on the RHS is $\sum_{(\bx,\bys) \in
  \dataset} \log Z(\bys, \temp)$. Thus, the minimum of
$\kl{\modelp}{\targetp}$ and $\lossrl$ is achieved when $\modelp =
\targetp$. At $\temp = 0$, when there is no entropy regularization,
the optimal $\modelp$ is a delta distribution, $\modelp(\byp \mid
\bx) = \delta(\byp \mid \bys)$, where $\delta(\byp \mid \bys) = 1$ at
$\byp = \bys$ and $0$ at $\byp \ne \bys$. Note that $\delta(\byp \mid
\bys)$ is equivalent to the exponentiated payoff distribution in the limit as $\temp \to
0$.

Returning to the log-likelihood objective, one can verify
that \eqref{eq:lossml} is equivalent to a KL divergence in the
opposite direction between a delta distribution
$\delta(\byp \mid \bys)$ and the model distribution
$\modelp(\byp \mid \bx)$,
\begin{equation}\label{eq:rlkl}
\sum_{(\bx,\bys) \in \dataset}\kl{\delta(\byp \mid \bys)}{\modelp(\byp
  \mid \bx)} = \lossml(\btheta)~.
\end{equation}
There is no constant on the RHS, as the entropy of a delta
distribution is zero, \ie~$\ent{\delta(\byp \mid \bys)} = 0$.

We propose a method called {\em reward-augmented maximum likelihood
  (RAML)}, which generalizes ML by allowing a non-zero temperature
parameter in the exponentiated payoff distribution, while still
optimizing the KL divergence in the ML direction. The RAML objective
function takes the form,
\begin{equation}
\lossrml(\btheta; \temp, \dataset) = \sum_{(\bx,\bys) \in \dataset} \bigg\{ - \sum_{\byp \in \calY} \targetp(\byp \mid \bys; \temp) \log \modelp(\by \mid \bx) \bigg\}~,
\label{eq:lossrml}
\end{equation}
which can be re-expressed in terms of a KL divergence as follows,
\begin{equation}
\sum_{(\bx,\bys) \in \dataset}\kl{\targetp(\byp \mid \bys;
  \temp)}{\modelp(\byp \mid \bx)} = \lossrml(\btheta; \temp) +
\mathrm{constant}~,
\label{eq:RAML-kl}
\end{equation}
where the $\mathrm{constant}$ is
$-\sum_{(\bx,\bys) \in \dataset}\ent{q(\byp \mid \bys, \temp)}$.  Note
that the temperature parameter, $\temp \ge 0$, serves as a
hyper-parameter that controls the smoothness of the optimal
distribution around correct targets by taking into account the reward
function in the output space. The objective functions
$\lossrl(\btheta; \temp)$ and $\lossrml(\btheta; \temp)$, have the
same global optimum of $\modelp$, but they optimize a KL divergence in
opposite directions. We characterize the difference between these
two objectives below, showing that they are equivalent up to their first
order Taylor approximations. For optimization convenience, we focus on
minimizing $\lossrml(\btheta; \temp)$ to achieve a good solution for
$\lossrl(\btheta; \temp)$.

\comment{,in \eqref{eq:RL-kl} and \eqref{eq:RAML-kl}}

\vspace{-.1cm}
\subsection{Optimization}
\vspace{-.1cm}

Optimizing the reward augmented maximum likelihood (RAML) objective,
$\lossrml(\btheta; \temp)$, is straightforward if one can draw
unbiased samples from $\targetp(\byp \mid \bys; \temp)$. We can
express the gradient of $\lossrml$ in terms of an expectation over
samples from $\targetp(\byp \mid \bys; \temp)$,
\begin{equation}
\nabla_{\btheta} \lossrml(\btheta; \temp) ~=~ \E_{\targetp(\byp \mid
  \bys; \temp)} \big[ -\nabla_{\btheta}\log \modelp(\by \mid
\bx) \big]~.
\label{eq:grad-rml}
\end{equation}
Thus, to estimate $\nabla_{\btheta} \lossrml(\btheta; \temp)$ given a
mini-batch of examples for SGD, one draws $\byp$ samples given
mini-batch $\bys$'s and then optimizes log-likelihood on such samples
by following the mean gradient. At a temperature $\temp = 0$, this
reduces to always sampling $\bys$, hence ML training with no
sampling.

By contrast, the gradient of $\lossrl(\btheta; \temp)$, based on
likelihood ratio methods, takes the form,
\begin{equation}
\nabla_{\btheta} \lossrl(\btheta; \temp) ~=~ \E_{\modelp(\byp \mid
  \bx)} \big[ -\nabla_{\btheta}\log \modelp(\by \mid
\bx) \cdot \reward{\byp}{\bys} \big]~.
\label{eq:grad-rl}
\end{equation}
There are several critical differences between \eqref{eq:grad-rml} and
\eqref{eq:grad-rl} that make SGD optimization of $\lossrml(\btheta;
\temp)$ more desirable. First, in \eqref{eq:grad-rml}, one has to sample
from a stationary distribution, the so called exponentiated payoff
distribution, whereas in \eqref{eq:grad-rl} one has to sample from the
model distribution as it is evolving. 
Not only does sampling from the model
potentially slow down training, one also needs to employ several tricks
to get a better estimate of the gradient of
$\lossrl$~\cite{ranzatoetal15}. A body of literature in reinforcement
learning focuses on reducing the variance of \eqref{eq:grad-rl} by
using sophisticated techniques such as {\em actor-critique}
methods~\cite{suttonbarto98, degris2012model}. Further, the reward is
often sparse in a high-dimensional output space, which makes finding
any reasonable prediction challenging when \eqref{eq:grad-rl} is
used to refine a randomly initialized model. Thus, smart model
initialization is needed. By contrast, we initialize the models
randomly and refine them using \eqref{eq:grad-rml}.

\vspace{-.1cm}
\subsection{Sampling from the exponentiated payoff distribution}
\vspace{-.1cm}

To compute the gradient of the model using the RAML approach, one
needs to sample auxiliary outputs from the exponentiated payoff
distribution, $\targetp(\byp \mid \bys; \temp)$. This sampling is the
price that we have to pay to learn with rewards. One should contrast
this with loss-augmented inference in structured large margin methods,
and sampling from the model \comment{and reward reweighting of the
gradients} in RL. We believe sampling outputs
proportional to exponentiated rewards is more efficient and effective
in many cases.

Experiments in this paper use reward values defined by either negative
Hamming distance or negative edit distance. We sample from
$\targetp(\byp \mid \bys; \temp)$ by stratified sampling, where we
first select a particular distance, and then sample an output with
that distance value. Here we focus on edit distance sampling, as
Hamming distance sampling is a simpler special case. Given a sentence
$\bys$ of length $m$, we count the number of sentences within an edit
distance $e$, where $e \in \{0, \ldots, 2m\}$. Then, we reweight the
counts by $\exp\{-e / \temp\}$ and normalize. Let $c(e, m)$ denote the
number of sentences at an edit distance $e$ from a sentence of length
$m$. First, note that a deletion can be thought as a substitution with
a nil token. This works out nicely because given a vocabulary of
length $v$, for each insertion we have $v$ options, and for each
substitution we have $v-1$ options, but including the nil token, there
are $v$ options for substitutions too. When $e = 1$, there are $m$
possible substitutions and $m+1$ insertions. Hence, in total there are
$(2m+1)v$ sentences at an edit distance of $1$. Note, that exact
computation of $c(e, m)$ is difficult if we consider all edge cases,
for example when there are repetitive words in $\bys$, but ignoring
such edge cases we can come up with approximate counts that are
reliable for sampling. When $e > 1$, we estimate $c(e, m)$ by
\begin{equation}
c(e, m) = \sum_{s=0}^m {m \choose s} {m + e - 2s \choose e - s} v^e~,
\end{equation}
where $s$ enumerates over the number of substitutions. Once $s$ tokens
are substituted, then those $s$ positions lose their significance, and
the insertions before and after such tokens could be merged. Hence,
given $s$ substitutions, there are really $m - s$ reference positions
for $e - s$ possible insertions. Finally, one can sample according to
BLEU score or other sequence metrics by importance sampling where the
proposal distribution could be edit distance sampling above.

\vspace*{-.1cm}
\section{RAML analysis}
\vspace*{-.2cm}

In the RAML framework, we find the model parameters by minimizing the
objective \eqref{eq:lossrml} instead of optimizing the RL
objective, \ie~regularized expected reward in \eqref{eq:lossrl}. The
difference lies in minimizing
$\kl{\targetp(\byp \mid \bys; \temp)}{\modelp(\byp \mid \bx)}$ instead
of $\kl{\modelp(\byp \mid \bx)}{\targetp(\byp \mid \bys; \temp)}$. For
convenience, let's refer to $\targetp(\byp \mid \bys; \temp)$ as $q$,
and $\modelp(\byp
\mid \bx)$ as $p$. Here, we characterize the difference between the two
divergences, $\kl{q}{p} - \kl{p}{q}$, and use this analysis to
motivate the RAML approach.

We will initially consider the KL divergence in its more general form
as a Bregman divergence, which will make some of the key properties
clearer.  A Bregman divergence is defined by a strictly convex,
differentiable, closed potential function
$F:\domain\rightarrow\Real$~\cite{banerjeeetal05}. Given $F$ and two
points $p, q \in \domain$, the corresponding Bregman divergence
$D_F:\domain\times\domain\rightarrow\Real^+$ is defined by
\begin{equation}
\df{p}{q} = F\left(p\right) - F\left(q\right) - \trans{\left(p-q\right)} \nabla
F\left(q\right)~,
\end{equation}
the difference between the strictly convex potential at $p$ and its
first order Taylor approximation expanded about $q$.  Clearly this
definition is not symmetric between $p$ and $q$.  By the strict
convexity of $F$ it follows that $\df{p}{q} \geq 0$ with $\df{p}{q} =
0$ if and only if $p = q$. To characterize the difference between
opposite Bregman divergences, we provide a simple result that relates
the two directions %for an arbitrary Bregman divergence.  
under suitable conditions.
Let $H_F$ denote the Hessian of $F$.

%\vspace*{-.2cm}
\begin{proposition}
For any twice differentiable strictly convex closed potential $F$,
and $p, q\in\int(\domain)$:
%\vspace*{-1\baselineskip}
\begin{align}
\df{q}{p}
&=
\df{p}{q}
+
\smallfrac{1}{4}
\trans{(p-q)}
\big(H_{F}(\altr)-H_F(\altaltr)\big) 
(p-q)
\end{align}
for some $\altr=(1-\alpha)p+\alpha q$,~
($0\leq\alpha\leq \smallfrac{1}{2}$),~
$\altaltr=(1-\beta)q+\beta p$,~
($0\leq\beta\leq\smallfrac{1}{2}$).~~~(see \supp{})\label{propn:prelim}
\end{proposition}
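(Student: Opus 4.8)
The plan is to reduce the statement to a one-dimensional identity along the segment joining $p$ and $q$, and then extract the two Hessian evaluation points by a signed mean-value argument. Set $d=p-q$ and $\phi(t)=F(q+td)$ for $t\in[0,1]$; since $p,q\in\int(\domain)$ and $\domain$ is convex the whole segment lies in $\int(\domain)$, so $\phi$ is twice differentiable with $\phi'(t)=\trans{d}\nabla F(q+td)$ and $g(t):=\phi''(t)=\trans{d}H_F(q+td)\,d\ge 0$. Taylor's theorem with integral remainder about $t=0$ gives $\df{p}{q}=\phi(1)-\phi(0)-\phi'(0)=\smallint_0^1(1-t)\,g(t)\,dt$, and about $t=1$ it gives $\df{q}{p}=\phi(0)-\phi(1)+\phi'(1)=\smallint_0^1 t\,g(t)\,dt$. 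Subtracting, $\df{q}{p}-\df{p}{q}=\smallint_0^1(2t-1)\,g(t)\,dt$.

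Next I would split this integral at $t=\smallfrac{1}{2}$, where the weight $2t-1$ changes sign. On $[\smallfrac{1}{2},1]$ the weight $2t-1$ is nonnegative and satisfies $\smallint_{1/2}^1(2t-1)\,dt=\smallfrac{1}{4}$; on $[0,\smallfrac{1}{2}]$ the weight $1-2t$ is nonnegative and satisfies $\smallint_0^{1/2}(1-2t)\,dt=\smallfrac{1}{4}$. Applying the mean value theorem for integrals on each piece (legitimate since $g$ is continuous, which follows from continuity of the Hessian) yields $t_1\in[\smallfrac{1}{2},1]$ and $t_2\in[0,\smallfrac{1}{2}]$ with the two pieces equal to $\smallfrac{1}{4}g(t_1)$ and $\smallfrac{1}{4}g(t_2)$, hence $\df{q}{p}-\df{p}{q}=\smallfrac{1}{4}\big(g(t_1)-g(t_2)\big)=\smallfrac{1}{4}\trans{(p-q)}\big(H_F(q+t_1d)-H_F(q+t_2d)\big)(p-q)$. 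To match the stated form, write the evaluation points as $\altr=q+t_1d=(1-\alpha)p+\alpha q$ with $\alpha=1-t_1\in[0,\smallfrac{1}{2}]$, and $\altaltr=q+t_2d=(1-\beta)q+\beta p$ with $\beta=t_2\in[0,\smallfrac{1}{2}]$; this is exactly the claimed identity.

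The analytic content is just that opposite Bregman divergences are the same integral of the nonnegative directional curvature $g$ against the complementary weights $1-t$ and $t$, so their difference picks out the antisymmetric weight $2t-1$, which averages to $\pm\smallfrac{1}{4}$ on the two halves of $[0,1]$; everything else is bookkeeping. The two things I expect to watch are (i) getting the signs right in the two Taylor expansions so that $\df{q}{p}$ genuinely comes out with weight $t$ rather than $1-t$ (the only subtlety is $\nabla F(p)^{\mathsf T}(q-p)=-\phi'(1)$), and (ii) the regularity: both the integral remainder and the integral mean value theorem want $g$ continuous, so I would either assume $F\in C^2$ or note that this is automatic on the compact segment inside $\int(\domain)$ for a strictly convex twice-differentiable potential. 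No probabilistic interpretation enters at this stage; the ``difference of variances on interpolating distributions'' alluded to earlier follows as a separate step by specializing $F$ to negative entropy.
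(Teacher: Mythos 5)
Your proof is correct, but it follows a genuinely different route from the paper's. The paper performs two second-order Taylor expansions of $F$ at the midpoint $\smallfrac{q+p}{2}$, one about $p$ and one about $q$, each with a Lagrange-form remainder; equating the two resulting expressions for $F(p)+F(q)-2F(\smallfrac{q+p}{2})$ yields the identity at once, with $a$ and $b$ arising as the two Lagrange points (each confined to the half-segment between the midpoint and its base point, whence $\alpha,\beta\leq\smallfrac{1}{2}$). You instead restrict to the one-dimensional slice $\phi(t)=F(q+t(p-q))$, represent both divergences as integrals over $[0,1]$ of the directional curvature $g(t)=\trans{(p-q)}H_F(q+t(p-q))(p-q)$ against the complementary weights $1-t$ and $t$, and recover the constant $\smallfrac{1}{4}$ and the two evaluation points by applying the weighted mean value theorem for integrals to the antisymmetric weight $2t-1$ on each half of $[0,1]$. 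Your representation is arguably more informative --- it makes transparent why the constant is $\smallfrac{1}{4}$, why the two points land in opposite halves of the segment, and how other splittings would give other bounds --- but it costs slightly more regularity: both the integral-remainder form of Taylor's theorem and the integral mean value theorem require $g$ to be continuous, and your remark that this is ``automatic'' for a strictly convex twice-differentiable potential is not justified, since twice differentiability does not imply continuity of the Hessian. The paper's midpoint double expansion gets by with pointwise second differentiability. In the actual application ($F^*_\temp=\temp\,\logsumexp(\cdot/\temp)$, which is smooth) the distinction is immaterial, but if you keep your route you should state a $C^2$ hypothesis explicitly. Your sign bookkeeping and the identifications $\alpha=1-t_1$, $\beta=t_2$ are all correct.
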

\vspace*{-.1cm}
For probability vectors $p, q \in \Delta^{|\calY|}$ and a potential
$F\left(p\right) = -\temp\ent{p}$, $\df{p}{q} = \temp\kl{p}{q}$.  Let
$\softmax : \Real^{|\calY|} \to \Delta^{|\calY|}$ denote a normalized
exponential operator that takes a real-valued logit vector and turns
it into a probability vector. Let $r$ and $s$ denote real-valued
logit vectors such that $q = \softmax(r / \temp)$ and $p
= \softmax(s/\temp)$. Below, we characterize the gap between
$\kl{p(y)}{q(y)}$ and $\kl{q(y)}{p(y)}$ in terms of the difference
between $s(y)$ and $r(y)$.

\comment{
Then, $q
= \softmax(r / \temp)$, where $r$ denodes a vector of rewards for all
$\by$. Similarly, suppose $s$ denotes the logits (log probabilities)
predicted by the model for different outputs divided by $\temp$, so
that $p = \softmax(s/\temp)$. One can characterize the gap between
$\kl{p}{q}$ and $\kl{q}{p}$ as,
}

%\vspace*{-.2cm}
\begin{proposition}
The KL divergence between $p$ and $q$ in two directions can be
expressed as,
\begin{equation*}
\begin{aligned}
%\begin{align}
\kl{p}{q}
%D_{F^*_\temp}(\anyp\|\targetp)
& =
\kl{q}{p}
%D_{F^*_\temp}(\targetp\|\anyp)
+
\smallfrac{1}{4\temp^2}\;
\mathrm{Var}_{y\sim \softmax(\altr / \temp)}\left[\anyr(y) - \targetr(y) \right]
-
\smallfrac{1}{4\temp^2}\;
\mathrm{Var}_{y\sim \softmax(\altaltr / \temp)}\left[\anyr(y) - \targetr(y) \right]
%\label{eq:equality}
\\
& <
\kl{q}{p}
%D_{F^*_\temp}(\targetp\|\anyp)
+
\smallfrac{1}{\temp^2}\;
\| \anyr-\targetr \|_2^2
,
%\end{align}
\end{aligned}
\label{eq:inequality}
\end{equation*}
for some 
$\altr=(1-\alpha)\anyr+\alpha\targetr$,~
($0\leq\alpha\leq \smallfrac{1}{2}$),~
$\altaltr=(1-\beta)\targetr+ \beta \anyr$,~
($0\leq\beta\leq \smallfrac{1}{2}$).~~~(see \supp{})
\vspace{-.3cm}
\label{propn:variance}
\end{proposition}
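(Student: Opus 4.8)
The plan is to reduce Proposition~\ref{propn:variance} to Proposition~\ref{propn:prelim}, applied not to the negative-entropy potential on the simplex but to its convex dual on logit space. Let $G(\bv) = \temp\log\sum_{y}\exp(v(y)/\temp)$ denote the (scaled) log-sum-exp potential, so that $\nabla G(\bv) = \softmax(\bv/\temp)$ and $H_G(\bv) = \smallfrac{1}{\temp}\big(\diag(\pi) - \pi\trans{\pi}\big)$ where $\pi = \softmax(\bv/\temp)$. The first step is a direct calculation showing that $\temp\,\kl{p}{q}$ coincides with a Bregman divergence of $G$ on the logit side: writing $\log p(y) = \anyr(y)/\temp - \log Z$ and likewise for $q$, one gets
\[
\temp\,\kl{p}{q} \;=\; G(\targetr) - G(\anyr) - \trans{(\targetr-\anyr)}\nabla G(\anyr) \;=\; \dv{G}{\targetr}{\anyr},
\]
and symmetrically $\temp\,\kl{q}{p} = \dv{G}{\anyr}{\targetr}$. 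So the gap $\kl{p}{q} - \kl{q}{p}$ is exactly $\smallfrac{1}{\temp}$ times the gap between opposite Bregman divergences of $G$ at the points $\anyr,\targetr$, which is precisely what Proposition~\ref{propn:prelim} controls.

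Given that, I would invoke Proposition~\ref{propn:prelim} with potential $G$ and points $\anyr,\targetr$ (playing the roles of $p,q$), which gives
\[
\dv{G}{\targetr}{\anyr} = \dv{G}{\anyr}{\targetr} + \smallfrac{1}{4}\,\trans{(\anyr-\targetr)}\big(H_G(\altr) - H_G(\altaltr)\big)(\anyr-\targetr)
\]
for $\altr = (1-\alpha)\anyr + \alpha\targetr$ and $\altaltr = (1-\beta)\targetr + \beta\anyr$ with $\alpha,\beta\in[0,\smallfrac{1}{2}]$. The remaining step is to read off the quadratic forms: for any direction $\bu$ and logit vector $\bv$,
\[
\trans{\bu}\,H_G(\bv)\,\bu = \smallfrac{1}{\temp}\Big(\textstyle\sum_y \pi(y)u(y)^2 - \big(\sum_y\pi(y)u(y)\big)^2\Big) = \smallfrac{1}{\temp}\,\mathrm{Var}_{y\sim\softmax(\bv/\temp)}[u(y)].
\]
Taking $\bu = \anyr - \targetr$ turns the two quadratic terms into the two variances (the sign is irrelevant since $\mathrm{Var}[\anyr(y)-\targetr(y)] = \mathrm{Var}[\targetr(y)-\anyr(y)]$), and dividing through by $\temp$ yields the stated identity with the $\smallfrac{1}{4\temp^2}$ prefactor. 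For the inequality, I would drop the second, non-negative, variance term and bound the first by $\mathrm{Var}_\pi[u] \le \E_\pi[u^2] = \sum_y\pi(y)u(y)^2 \le \|\anyr-\targetr\|_2^2$; the factor-$4$ slack in the constant then gives the strict bound $\smallfrac{1}{\temp^2}\|\anyr-\targetr\|_2^2$.

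The one point requiring care is that $G$ is strictly convex only modulo the all-ones direction (log-sum-exp is shift-invariant), so Proposition~\ref{propn:prelim} does not literally apply to $G$ on all of $\Real^{|\calY|}$. I would handle this by noting that $p$, $q$, and all the variance and Hessian terms depend on $\anyr,\targetr$ only through their coordinate differences, so we may assume without loss of generality that $\anyr$ and $\targetr$ --- and hence $\altr,\altaltr$ --- lie in the hyperplane $\{\bv:\sum_y v(y) = 0\}$, on which $G$ is strictly convex, smooth, and has the same Bregman divergence as the unrestricted $G$. (Alternatively, the \emph{identity} in Proposition~\ref{propn:prelim}, as opposed to the non-negativity of $D_F$, uses only twice differentiability --- via Taylor's theorem with integral remainder and the mean value theorem for integrals --- so it applies to $G$ directly.) Everything else is routine; the only real content is the primal-to-dual transfer in the first paragraph, which is where the bookkeeping with partition functions and additive logit constants needs to be done carefully.
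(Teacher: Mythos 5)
Your proof is correct and follows essentially the same route as the paper's: both pass to logit space via the duality $\temp\,\kl{p}{q} = \dfst{r}{s}$ for the log-sum-exp potential $F^*_\temp(\bv)=\temp\log\sum_y\exp(v(y)/\temp)$, apply Proposition~\ref{propn:prelim} there, and read the Hessian quadratic forms $\trans{(s-r)}H_{F^*_\temp}(\cdot)(s-r)$ as $\smallfrac{1}{\temp}$ times variances under the interpolated softmax distributions. You diverge in two minor ways, both defensible. First, for the inequality you drop the subtracted variance term and bound the remaining one by its second moment, $\mathrm{Var}_\pi[u]\le\E_\pi[u^2]\le\|s-r\|_2^2$, which gives the tighter constant $\smallfrac{1}{4\temp^2}\|s-r\|_2^2$; the paper instead bounds the two Hessian contributions jointly via $\|\softmax_\temp(a)-\softmax_\temp(b)\|_\infty\le 2$ and $\|\softmax_\temp(b)\|_2^2\le 1$, landing at $\smallfrac{3}{4\temp^2}\|s-r\|_2^2$. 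Either way the stated bound $\smallfrac{1}{\temp^2}\|s-r\|_2^2$ follows. Second, you explicitly flag that log-sum-exp is convex but not strictly convex on all of $\Real^{|\calY|}$ (its Hessian annihilates the all-ones direction), whereas the paper simply asserts that $F^*_\temp$ "satisfies the conditions" of Proposition~\ref{propn:prelim}; your observation that the identity only requires twice differentiability (strict convexity is never used in the Taylor-expansion argument), or alternatively a harmless restriction to the zero-sum hyperplane, patches this small gap cleanly. Neither change alters the substance of the argument.
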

Given Proposition~\ref{propn:variance}, one can relate the two %RL and RAML
 objectives, $\lossrl(\btheta; \temp)$~\eqref{eq:RL-kl} and
 $\lossrml(\btheta; \temp)$~\eqref{eq:RAML-kl}, by
\begin{equation}
\lossrl
= \temp \lossrml +
\smallfrac{1}{4\temp} \!\!\sum_{(\bx,\bys) \in \dataset} \Big\{
\mathrm{Var}_{\by\sim \softmax(\altr / \temp)}\left[\anyr(\by) - \targetr(\by) \right]
-
\mathrm{Var}_{\by\sim \softmax(\altaltr / \temp)}\left[\anyr(\by) - \targetr(\by) \right]
\Big\}~,
\label{eq:rml-rl}
\end{equation}
where $\anyr(\by)$ denotes $\temp$-scaled logits predicted by the
model such that $\modelp(\byp \mid \bx) = \softmax( \anyr(\by)
/ \temp)$, and $\targetr(\by) = \reward{\byp}{\bys}$. The gap between
regularized expected reward \eqref{eq:RL-kl} and $\temp$-scaled RAML
criterion \eqref{eq:RAML-kl} is simply a difference of two variances,
whose magnitude decreases with increasing regularization.
Proposition~\ref{propn:variance} also shows an opportunity for
learning algorithms: if $\temp$ is chosen so that
$\targetp=\softmax(\targetr / \temp)$, then $\softmax(\altr / \temp)$
and $\softmax(\altaltr / \temp)$ have lower variance than $\anyp$
(which can always be achieved for sufficiently small $\temp$ provided
$\anyp$ is not deterministic), then the expected regularized reward
under $\anyp$, and its gradient for training, can be exactly
estimated, in principle, by including the extra variance terms and
sampling from more focused distributions than $\anyp$.  Although we
have not yet incorporated approximations to the additional variance
terms into RAML, this is an interesting research direction.

\comment{
\mohammad{I do not quite understand the next sentence. Can you make it
  explicit if you are suggesting to augment the objective to
  incorporate variance terms?}  \dale{I am just pointing out the
  possibility that *if* one were to incorporate the two variance terms
  (not suggesting we have done so), then, at least in principle, there
  would be a way to estimate the LHS of \eqref{eq:equality} without
  sampling from p.  I think that is pretty neat.}
}

\comment{
Interestingly, each of the alternatives has its adavantages and
disadvantages. While, equation~\ref{eqn:rlkl} is a good objective
in itself, it is a combinatorial sum and suffers from high variance,
especially early in the learning while the model is uncertain.
Further since the samples are drawn from the distribution being
learnt, the signal is non-stationary. In contrast, the RAML
criterion has clear computational advantages: even though the
expectation is also a combinatorial sum, the target distribution
q remains fixed throughout training and its sharpness can be arbitrarily
 controlled by the temperature parameter $\temp$.

The drawback of RAML would appear to be that it optimizes a surrogate
criterion that is not the expected task reward one would like to maximize.
However, the gap between the RAML criterion and
regularized expected reward is simply a difference
of two variances, whose magnitude decreases with increasing regularization.
}

%\begin{align}
%D_F\left(q,p\right) &= F\left(q\right) - F\left(p\right) - \nabla F\left(p\right) (q-p) \nonumber\\
                    %&= F\left(p\right) + \nabla F\left(p\right) (q-p) + \frac{1}{2} (q-p) H_F(rp+(1-r)q) (q-p)
                        %- F\left(p\right) - \nabla F\left(p\right) (q-p) \nonumber\\
                    %&= \frac{1}{2} (q-p) H(rp+(1-r)q) (q-p)
%\end{align}
%for some $0\leq r \leq 1$, where $H_F$ is the Hessian of $F$. This term represents
%Lagrange's remainder term.  Similarly $D_F\left(p,q\right) = \frac{1}{2} (q-p) H(sp+(1-s)q) (q-p)$
%Thus
%
%\begin{align}
%D_F\left(q,p\right) - D_F\left(p,q\right) = \frac{1}{2} (q-p) (H(rp+(1-r)q) (q-p) - \frac{1}{2} (q-p) H(sp+(1-s)q) (q-p)
%\end{align}

\section{Related Work}
\vspace*{-.2cm}

The literature on structure output prediction is vast, falling into
three broad categories: (a) supervised learning approaches that ignore
task reward and use supervision; (b) reinforcement learning
approaches that use only task reward and ignore supervision; and (c)
hybrid approaches that attempt to exploit both supervision and task
reward. This paper clearly falls in category (c).

Work in category (a) includes classical conditional random
fields~\cite{laffertyetal01} and conditional log-likelihood training of
RNNs~\cite{sutskeveretal14,bahdanau2014neural}. It also includes the
approaches that attempt to perturb the training inputs and supervised
training structures to improves the robustness (and hopefully the
generalization) of the conditional models
(\eg~see~\cite{bengioetal15,kumaretal15\extracite{,strubelletal15}}). These
approaches offer improvements to standard maximum likelihood
estimation, but they are fundamentally limited by not incorporating
a task reward. \extra{The DAGGER method~\cite{rossetal10} also focuses
on using supervision only, but can be extended to use a task loss;
even then, the DAGGER assumes that an expert is available to label
every alternative sequence, which does not fit the usual structured
prediction scenario.}

By contrast, work in category (b) includes reinforcement learning
approaches that only consider task reward and do not use any other 
supervision.
Beyond the traditional reinforcement learning approaches,
such as policy gradient
\cite{williams92,suttonetal00},
and actor-critic
\cite{suttonbarto98},
Q-learning
\cite{vanhasseltetal15},
this category includes SEARN \cite{daumeetal09}.
There is some relationship to the work presented here and
work on relative entropy policy search
\cite{petersetal10},
and
policy optimization via expectation maximization
\cite{vlassisetal09}
and KL-divergence
\cite{kappen12,todorov06},
however none of these bridge the gap between the two 
directions of the KL-divergence, nor do they consider any
supervision data as we do here.

\comment{
Recently, this line of research has been
improved to directly consider task reward \cite{tamiretal10}, but is
limited to a perceptron like
}

There is also a substantial body of related work in category (c),
which considers how to exploit supervision information while training
with a task reward metric.  A canonical example is large margin
structured
prediction~\cite{taskar2004\extracite{,tsochantaridisetal05},gimpel2010softmax},
which explicitly uses supervision and considers an upper bound
surrogate for task loss. This approach requires loss augmented
inference that cannot be efficiently achieved for general task
losses. We are not aware of successful large-margin methods for neural
sequence prediction, but a related approach
by~\cite{wiseman2016sequence} for neural machine translation builds on
SEARN~\cite{daumeetal09}. Some form of inference during training is
still needed, and the characteristics of the objective are not well
studied. We also mentioned the work on maximizing task reward by
bootstrapping from a maximum likelihood policy
\cite{ranzatoetal15,silveretal16}, but such an approach only makes
limited use of supervision.  Some work in robotics has considered
exploiting supervision as a means to provide indirect sampling
guidance to improve policy search methods that maximize task reward
\cite{levinekoltun13a,levinekoltun13b,shen2015minimum\comment{,levinekoltun14,levineetal15}},
but these approaches do not make use of maximum likelihood training.
An interesting work is \cite{kimetal14} which explicitly 
incorporates supervision in the policy evaluation phase of a
policy iteration procedure that otherwise seeks to maximize task reward.
However, this approach only considers a greedy policy form
that does not lend itself to being represented as a deep RNN,
and has not been applied to structured output prediction.
Most relevant are ideas for improving approximate maximum likelihood training 
for intractable models by passing the gradient calculation through 
an approximate inference procedure \cite{domke12,stoyanovetal11}.
These works, however, are specialized to particular approximate inference
procedures, and, by directly targeting expected reward, are subject to the
variance problems that motivated this work.

One advantage of the RAML framework is its computational efficiency at
training time. By contrast, RL and scheduled
sampling~\cite{bengioetal15} require sampling from the model, which
can slow down the gradient computation by $2\times$. Structural SVM
requires loss-augmented inference which is often more expensive than
sampling from the model. Our framework only requires sampling from a
fixed exponentated payoff distribution, which can be thought as a form
of input pre-processing. This pre-processing can be parallelized by
model training by having a thread handling loading the data and
augmentation.

Recently, we were informed of the unpublished work of
Volkovs~\etal~\cite{volkovsetal11} that also proposes an objective
like RAML, albeit with a different derivation. No theoretical relation
was established to entropy regularized RL, nor was the method applied
to neural nets for sequences, but large gains were reported over
several baselines applying the technique to ranking problems with CRFs.

\vspace*{-.1cm}
\section{Experiments}
\vspace*{-.2cm}

We compare our approach, reward augmented maximum likelihood ({\em
  RAML}), with standard maximum likelihood ({\em ML}) training on
sequence prediction tasks using state-of-the-art attention-based
recurrent neural
networks~\cite{sutskeveretal14,bahdanau2014neural}. Our experiments
demonstrate that the RAML approach considerably outperforms ML baseline
on both speech recognition and machine translation tasks.

% On both tasks we consider edit distance a
% mechanism for samply target sentences.

\vspace*{-.1cm}
\subsection{Speech recognition}
\vspace*{-.2cm}

For experiments on speech recognition, we use the TIMIT dataset; a
standard benchmark for clean phone recognition. This dataset consists of
recordings from different speakers reading ten phonetically rich
sentences covering major dialects of American English. We use the
standard train\,/\,dev\,/\,test splits suggested by the Kaldi
toolkit~\cite{povey2011kaldi}.

As the sequence prediction model, we use an attention-based
encoder-decoder recurrent model of~\cite{chan2015listen} with three
$256$-dimensional LSTM layers for encoding and one $256$-dimensional
LSTM layer for decoding. We do not modify the neural network
architecture or its gradient computation in any way, but we only
change the output targets fed into the network for gradient
computation and SGD update. The input to the network is a standard
sequence of $123$-dimensional log-mel filter response
statistics. Given each input, we generate new outputs around ground
truth targets by sampling according to the exponentiated payoff
distribution. We use negative edit distance as the measure of
reward. Our output augmentation process allows insertions, deletions,
and substitutions.

An important hyper-parameter in our framework is the temperature
parameter,~$\temp$, controlling the degree of output augmentation. We
investigate the impact of this hyper-parameter and report results for
$\temp$ selected from a candidate set of $\temp \in \{ 0.6,\, 0.65,\,
0.7,\, 0.75,\, 0.8,\, 0.85,\, 0.9,\, 0.95,\, 1.0\}$. At a temperature
of $\temp = 0$, outputs are not augmented at all, but as $\temp$
increases, more augmentation is
generated. \figref{fig:augmentation-prob} depicts the fraction of
different numbers of edits applied to a sequence of length $20$ for
different values of $\temp$. These edits typically include very small
number of deletions, and roughly equal number of insertions and
substitutions. For insertions and substitutions we uniformly sample
elements from a vocabulary of $61$ phones. According to
\figref{fig:augmentation-prob}, at $\temp = 0.6$, more than $60\%$ of
the outputs remain intact, while at $\temp = 0.9$, almost all target
outputs are being augmented with $5$ to $9$ edits being sampled with a
probability larger than $0.1$. We note that the augmentation becomes
more severe as the outputs get longer.

The phone error rates (PER) on both dev and test sets for different
values of $\temp$ and the ML baseline are reported in
\tabref{tab:per-on-timit}. Each model is trained and tested $4$ times,
using different random seeds. In \tabref{tab:per-on-timit}, we report
average PER across the runs, and in parenthesis the difference of
average error to minimum and maximum error. We observe that a
temperature of $\temp = 0.9$ provides the best results, outperforming
the ML baseline by $2.9\%$ PER on the dev set and $2.3\%$ PER on the
test set. The results consistently improve when the temperature
increases from $0.6$ to $0.9$, and they get worse beyond $\temp =
0.9$. It is surprising to us that not only the model trains with such
a large amount of augmentation at $\temp = 0.9$, but also it
significantly improves upon the baseline. Finally, we note that
previous work~\cite{chorowski2014end,chorowski15} suggests several
refinements to improve sequence to sequence models on TIMIT by adding
noise to the weights and using more focused forward-moving attention
mechanism. While these refinements are interesting and they could be
combined with the RAML framework, in this work, we do not
implement such refinements, and focus specifically on a fair
comparison between the ML baseline and the RAML method.

\definecolor{A}{HTML}{AA191C}
\definecolor{B}{HTML}{FDAE61}
\definecolor{C}{HTML}{ABDDA4}
\definecolor{D}{HTML}{2B83BA}
\definecolor{E}{HTML}{D2FF00}
\definecolor{F}{HTML}{FF00D2}
\definecolor{G}{HTML}{2FF00D}
\definecolor{H}{HTML}{00D2FF}
\definecolor{I}{HTML}{F63636}
\definecolor{J}{HTML}{00FFAA}
\definecolor{K}{HTML}{000000}

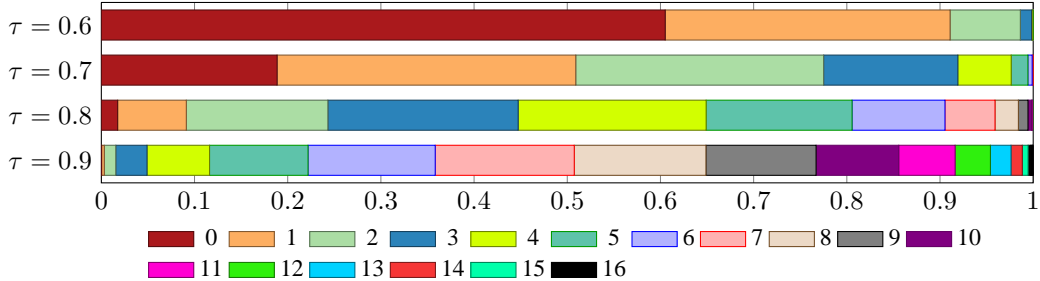
\begin{figure}
\pgfplotstableread[col sep=space, header=false]{
temp0.6  0.6051  0.3057  0.0754  0.0121  0.0014  0.0001  0.0000  0.0000  0.0000  0.0000  0.0000  0.0000  0.0000  0.0000  0.0000  0.0000  0.0000  0.0000  0.0000  0.0000  0.0000
temp0.7  0.1886  0.3205  0.2660  0.1440  0.0573  0.0179  0.0046  0.0010  0.0002  0.0000  0.0000  0.0000  0.0000  0.0000  0.0000  0.0000  0.0000  0.0000  0.0000  0.0000  0.0000
temp0.8  0.0175  0.0737  0.1519  0.2042  0.2017  0.1566  0.0997  0.0537  0.0251  0.0103  0.0038  0.0013  0.0004  0.0001  0.0000  0.0000  0.0000  0.0000  0.0000  0.0000  0.0000
temp0.9  0.0003  0.0029  0.0123  0.0335  0.0671  0.1057  0.1365  0.1492  0.1413  0.1182  0.0887  0.0605  0.0379  0.0221  0.0121  0.0062  0.0030  0.0014  0.0006  0.0003  0.0001
}\datatable
\begin{tikzpicture}
  \begin{axis}[
    width=\textwidth, %
    xbar stacked, %
    legend style={
        legend columns=11,
        at={(xticklabel cs:0.5)},
        anchor=north,
        draw=none,
        font=\footnotesize
    },
    y=-0.6cm,%
    bar width=0.4cm,%
    yticklabels = {$\temp = 0.6$, $\temp = 0.7$, $\temp = 0.8$, $\temp = 0.9$},%
    enlarge y limits={abs=0.5},%
    ytick=data,%
    xlabel={}, %
    xmin=0.0, xmax=1.0, %
    area legend
    ]%
\addplot[A!50!black,fill=A] table [x=1, y expr=\coordindex, text = 0] {\datatable};
\addplot[B!50!black,fill=B] table [x=2, y expr=\coordindex] {\datatable};
\addplot[C!50!black,fill=C] table [x=3, y expr=\coordindex] {\datatable};
\addplot[D!50!black,fill=D] table [x=4, y expr=\coordindex] {\datatable};
\addplot[E!50!black,fill=E] table [x=5, y expr=\coordindex] {\datatable};
\addplot[green!60!black,fill=cyan!30!green!60] table [x=6, y expr=\coordindex] {\datatable};
\addplot table [x=7, y expr=\coordindex] {\datatable};
\addplot table [x=8, y expr=\coordindex] {\datatable};
\addplot table [x=9, y expr=\coordindex] {\datatable};
\addplot table [x=10, y expr=\coordindex] {\datatable};
\addplot table [x=11, y expr=\coordindex] {\datatable};
\addplot[F!50!black,fill=F] table [x=12, y expr=\coordindex] {\datatable};
\addplot[G!50!black,fill=G] table [x=13, y expr=\coordindex] {\datatable};
\addplot[H!50!black,fill=H] table [x=14, y expr=\coordindex] {\datatable};
\addplot[I!50!black,fill=I] table [x=15, y expr=\coordindex] {\datatable};
\addplot[J!50!black,fill=J] table [x=16, y expr=\coordindex] {\datatable};
\addplot[K!50!black,fill=K] table [x=17, y expr=\coordindex] {\datatable};
\addplot[K!50!black,fill=K] table [x=18, y expr=\coordindex] {\datatable};
\addplot[K!50!black,fill=K] table [x=19, y expr=\coordindex] {\datatable};
\addplot[K!50!black,fill=K] table [x=20, y expr=\coordindex] {\datatable};
\legend{0,1,2,3,4,5,6,7,8,9,10,11,12,13,14,15,16}
\end{axis}
\end{tikzpicture}
\caption{ Fraction of different number of edits applied to a sequence
  of length $20$ for different $\temp$. At $\temp = 0.9$,
  augmentations with $5$ to $9$ edits are sampled with a probability $> 0.1$. [view in color]}
\label{fig:augmentation-prob}
\vspace*{-.1cm}
\end{figure}

\begin{table}
\begin{center}
\begin{tabular}{@{\hspace{.2cm}}c@{\hspace{.2cm}}|@{\hspace{.2cm}}c@{\hspace{.2cm}}|@{\hspace{.2cm}}c@{\hspace{.2cm}}}
%\begin{tabular}{ccccccc}
Method & Dev set & Test set\\
\hline
\hline
ML baseline	&	$20.87$ ($-0.2$, $+0.3$)  & $22.18$ ($-0.4$, $+0.2$) \\
\hline
\comment{
%LS, $\eps = 0.01$	&	$20.87$ ($-0.4$, $+0.5$)  &  $22.26$ ($-0.3$, $+0.2$)\\
%LS, $\eps = 0.03$	&	$20.67$ ($-0.2$, $+0.3$)  &  $23.21$ ($-1.1$, $+2.1$)\\ 
LS, $\eps = 0.05$	&	$20.71$ ($-0.6$, $+0.5$)  &  $21.99$ ($-0.3$, $+0.5$)\\
LS, $\eps = 0.10$	&	$19.74$ ($-0.2$, $+0.2$)  &  $21.37$ ($-0.4$, $+0.7$)\\
LS, $\eps = 0.20$	&	$19.69$ ($-0.8$, $+0.4$)  &  $20.89$ ($-0.2$, $+0.2$)\\
LS, $\eps = 0.50$       &       $19.09$ ($-0.3$, $+0.3$)  &  $20.82$ ($-0.8$, $+0.8$)\\
\hline
}
RAML, $\temp = 0.60$	&	$19.92$ ($-0.6$, $+0.3$)  & $21.65$ ($-0.5$, $+0.4$) \\
RAML, $\temp = 0.65$	&	$19.64$ ($-0.2$, $+0.5$)  & $21.28$ ($-0.6$, $+0.4$) \\
RAML, $\temp = 0.70$	&	$18.97$ ($-0.1$, $+0.1$)  & $21.28$ ($-0.5$, $+0.4$) \\
RAML, $\temp = 0.75$	&	$18.44$ ($-0.4$, $+0.4$)  & $20.15$ ($-0.4$, $+0.4$) \\
RAML, $\temp = 0.80$	&	$18.27$ ($-0.2$, $+0.1$)  & $19.97$ ($-0.1$, $+0.2$) \\
RAML, $\temp = 0.85$	&	$18.10$ ($-0.4$, $+0.3$)  & $19.97$ ($-0.3$, $+0.2$) \\
{\bf \boldmath  RAML, $\temp = 0.90$}	&	{\boldmath $18.00$} ($-0.4$, $+0.3$)  & {\boldmath $19.89$} ($-0.4$, $+0.7$) \\
RAML, $\temp = 0.95$	&	$18.46$ ($-0.1$, $+0.1$)  & $20.12$ ($-0.2$, $+0.1$) \\
RAML, $\temp = 1.00$	&	$18.78$ ($-0.6$, $+0.8$)  & $20.41$ ($-0.2$, $+0.5$) \\
\end{tabular}
\end{center}
\caption{ Phone error rates (PER) for different methods on TIMIT dev
  and test sets. Average PER of $4$ independent training runs is
  reported. }
\label{tab:per-on-timit}
\vspace*{-.3cm}
\end{table}

\vspace*{.3cm}

\vspace*{-.3cm}
\subsection{Machine translation}
\vspace*{-.2cm}

We evaluate the effectiveness of the proposed approach on WMT'14
English to French machine translation benchmark. Translation quality
is assessed using {\em tokenized BLEU} score, to be consistent with
previous work on neural machine translation \cite{sutskeveretal14,
bahdanau2014neural, luong2015}. Models are trained on the full $36$M
sentence pairs from WMT'14 training set, and evaluated on $3003$
sentence pairs from newstest-2014 test set. To keep the sampling
process efficient and simple on such a large corpus, we augment
the output sentences only based on Hamming distance (\ie~edit distance
without insertion or deletion). For each sentece we sample a single
output at each step. One can consider insertions and deletions or
sampling according to exponentiated sentence BLEU scores, but we leave
that to future work.

\comment{ while BLEU score is primarily designed for translation
evaluation at a corpus level, not at an individual sentence
level.}

As the conditional sequence prediction model, we use an
attention-based encoder-decoder recurrent neural network similar
to~\cite{bahdanau2014neural}, but we use multi-layer encoder and
decoder networks consisting of three layers of $1024$ LSTM cells. As
suggested by~\cite{bahdanau2014neural}, for computing the softmax
attention vectors, we use a feedforward neural network with $1024$
hidden units, which operates on the last encoder and the first decoder
layers. In all of the experiments, we keep the network architecture
and the hyper-parameters fixed. All of the models achieve their peak
performance after about $4$ epochs of training, once we anneal the
learning rates. To reduce the noise in the BLEU score evaluation, we
report both peak BLEU score and BLEU score averaged among about $70$
evaluations of the model while doing the fifth epoch of training. We
perform beam search decoding with a beam size of $8$.

\tabref{tab:bleu-on-wmt} summarizes our experimental results on
WMT'14. We note that our ML translation baseline is quite strong, if
not the best among neural machine translation
models~\cite{sutskeveretal14, bahdanau2014neural, luong2015},
achieving very competitive performance for a single model. Even given
such a strong baseline, the RAML approach consistently improves the
results. Our best model with a temperature $\temp = 0.85$ improves
average BLEU by $0.4$, and best BLEU by $0.35$ points, which is a
considerable improvement. Again we observe that as we increase the
amount of augmentation from $\temp=0.75$ to $\temp=0.85$ the results
consistently get better, and then they start to get worse with more
augmentation.

\begin{table}
\begin{center}
\begin{tabular}{@{\hspace{.2cm}}c@{\hspace{.2cm}}|@{\hspace{.2cm}}c@{\hspace{.2cm}}|@{\hspace{.2cm}}c@{\hspace{.2cm}}}
Method & Average BLEU & Best BLEU\\
\hline
\hline
ML baseline	        & $36.50$ &  $36.87$ \comment{ $0.36866501$ } \\
\hline
RAML, $\temp = 0.75$	& $36.62$ &  $36.91$ \comment{ $36.914399$ } \\
RAML, $\temp = 0.80$	& $36.80$ &  $37.11$ \comment{ $37.114400$ } \\
{\bf \boldmath RAML, $\temp = 0.85$}	& {\boldmath $36.91$} &  {\boldmath $37.23$} \comment{ $37.233299$ } \\
RAML, $\temp = 0.90$	& $36.69$ &  $37.07$ \comment{ $37.073901$ } \\
RAML, $\temp = 0.95$	& $36.57$ &  $36.94$ \comment{ $36.936399$ } \\
\end{tabular}
\end{center}
\caption{ Tokenized BLEU score on WMT'14 English to French evaluated
  on newstest-2014 set. The RAML approach with different $\temp$
  considerably improves upon the maximum likelihood baseline. }
\label{tab:bleu-on-wmt}
\end{table}

{\bf Details.}  We train the models using asynchronous SGD with $12$
replicas without momentum. We use mini-batches of size $128$. We
initially use a learning rate of $0.5$, which we then exponentially
decay to $0.05$ after $800K$ steps. We keep evaluating the models
between $1.1$ and $1.3$ million steps and report average and peak BLEU
scores in \tabref{tab:bleu-on-wmt}. We use a vocabulary $200K$ words
for the source language and $80K$ for the target language. We only
consider training sentences that are up to $80$ tokens. We replace
rare words with several UNK tokens based on their first and last
characters.  At inference time, we replace UNK tokens in the output
sentences by copying source words according to largest attention
activations as suggested by~\cite{luong2015}.

\vspace*{-.1cm}
\section{Conclusion}
\vspace*{-.2cm}

We present a learning algorithm for structured output prediction that
generalizes maximum likelihood training by enabling direct
optimization of a task reward metric. Our method is computationally
efficient and simple to implement. It only requires augmentation of
the output targets used within a log-likelihood objective. We show how
using augmented outputs sampled according to edit distance improves a
maximum likelihood baseline by a considerable margin, on both machine
translation and speech recognition tasks. We believe this framework is
applicable to a wide range of probabilistic models with arbitrary
reward functions. In the future, we intend to explore the
applicability of this framework to other probabilistic models on tasks
with more complicated evaluation metrics.

\vspace*{.3cm}

\section{Acknowledgment}

We thank Dan Abolafia, Sergey Levine, Cinjon Resnick, Yujia Li, Ben
Poole, and the Google Brain team for insightful comments and
discussions.

%\bibliographystyle{splncs}
%\bibliographystyle{plain}
%\vspace*{-.4cm}
\bibliographystyle{abbrv}
\small{
  \bibliography{bib}%
}

\appendix
\section{Proofs}
\label{app:rml-proofs}

\setcounter{prop}{0}
\begin{proposition}
For any twice differentiable strictly convex closed potential $F$,
and $p, q\in\int(\domain)$:
\begin{align}
\df{q}{p}
&=
\df{p}{q}
+
\smallfrac{1}{4}
\trans{(p-q)}
\big(H_{F}(\altr)-H_F(\altaltr)\big) 
(p-q)
\label{eq:prop1-app}
\end{align}
for some $\altr=(1-\alpha)p+\alpha q$,~
($0\leq\alpha\leq \smallfrac{1}{2}$),~
$\altaltr=(1-\beta)q+\beta p$,~
($0\leq\beta\leq\smallfrac{1}{2}$).
\label{propn:taylor}
\end{proposition}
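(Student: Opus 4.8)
The plan is to derive the identity from the integral form of Taylor's remainder, which expresses each of $\df{p}{q}$ and $\df{q}{p}$ as a weighted average of the \emph{same} Hessian quadratic form taken along the segment $[p,q]$, and then to compare the two weight functions.

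Concretely, set $u=p-q$ and, for $t\in[0,1]$, let $g(t)=F(q+tu)$, which is well defined since $p,q\in\int(\domain)$. Taylor's theorem with integral remainder gives $g(1)=g(0)+g'(0)+\intop\nolimits_0^1(1-t)\,g''(t)\,dt$; substituting $g(0)=F(q)$, $g(1)=F(p)$, $g'(0)=\trans u\nabla F(q)$ and $g''(t)=\trans u H_F(q+tu)\,u$, this is exactly
\[
\df{p}{q}=\intop\nolimits_0^1 (1-t)\,\trans u\, H_F(q+tu)\,u\,dt
=\intop\nolimits_0^1 s\,\trans u\, H_F(p-su)\,u\,ds ,
\]
where the second equality is the substitution $s=1-t$ together with $q+(1-s)u=p-su$. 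Running the same argument with the roles of $p$ and $q$ interchanged (so $u$ becomes $-u$, leaving the quadratic form unchanged) gives, over the very same segment, $\df{q}{p}=\intop\nolimits_0^1 (1-s)\,\trans u\, H_F(p-su)\,u\,ds$.

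Write $\psi(s)=\trans u\,H_F(p-su)\,u$, which is continuous in $s$ — the only place the regularity of $F$ (continuity of its Hessian restricted to $[p,q]$) enters. Subtraction gives $\df{q}{p}-\df{p}{q}=\intop\nolimits_0^1 (1-2s)\,\psi(s)\,ds$. Split the integral at $s=\smallfrac12$: on $[0,\smallfrac12]$ the weight $1-2s$ is nonnegative with $\intop\nolimits_0^{1/2}(1-2s)\,ds=\smallfrac14$, and on $[\smallfrac12,1]$ it is nonpositive with $\intop\nolimits_{1/2}^{1}(2s-1)\,ds=\smallfrac14$. The first mean value theorem for integrals, applied on each piece, then produces $s_1\in[0,\smallfrac12]$ and $s_2\in[\smallfrac12,1]$ with
\[
\df{q}{p}-\df{p}{q}=\smallfrac14\,\psi(s_1)-\smallfrac14\,\psi(s_2).
\]

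Finally I would identify the two evaluation points with the claimed convex combinations: $p-s_1u=(1-s_1)p+s_1q$ has the form $\altr=(1-\alpha)p+\alpha q$ with $\alpha=s_1\in[0,\smallfrac12]$, while $p-s_2u=(1-s_2)p+s_2q=(1-\beta)q+\beta p$ with $\beta=1-s_2\in[0,\smallfrac12]$. Substituting $\psi(s_1)=\trans{(p-q)}H_F(\altr)(p-q)$ and $\psi(s_2)=\trans{(p-q)}H_F(\altaltr)(p-q)$ and rearranging yields precisely \eqref{eq:prop1-app}. The main obstacle is purely the regularity needed to use the integral-remainder identity and the mean value theorem together (continuity of $H_F$ along $[p,q]$, automatic for smooth potentials such as $F=-\temp\ent{\cdot}$ on the interior of the simplex); the exact constant $\smallfrac14$ and the restriction of $\altr,\altaltr$ to the "inner halves" of the segment fall out precisely because $1-2s$ has a definite sign on each of $[0,\smallfrac12],[\smallfrac12,1]$ and integrates to $\smallfrac14$ there — so beyond that, only careful bookkeeping of signs and endpoints is required.
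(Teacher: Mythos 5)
Your proof is correct, but it reaches the identity by a genuinely different route from the paper's. The paper writes two second-order Taylor expansions of $F$ with \emph{Lagrange} remainder, one about $p$ and one about $q$, both evaluated at the midpoint $\smallfrac{p+q}{2}$; equating the two resulting expressions for $F(p)+F(q)-2F(\smallfrac{p+q}{2})$ gives the identity at once, with the constant $\smallfrac{1}{4}$ arising from the half-step $\smallfrac{q-p}{2}$ inside each quadratic remainder and the intermediate points $\altr,\altaltr$ automatically confined to the inner halves of the segment. You instead use the \emph{integral} form of the remainder to exhibit both $\df{p}{q}$ and $\df{q}{p}$ as weighted averages of the same quadratic form $\psi(s)=\trans{(p-q)}H_F\big((1-s)p+sq\big)(p-q)$ along $[p,q]$, subtract to obtain the weight $1-2s$, and apply the weighted mean value theorem on each half of $[0,1]$, where that weight has constant sign and integrates to $\pm\smallfrac{1}{4}$. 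Your bookkeeping of signs and endpoints is right, and the payoff of your version is transparency: it makes completely mechanical both the origin of the constant $\smallfrac{1}{4}$ and the reason $\alpha,\beta$ are restricted to $[0,\smallfrac{1}{2}]$, and the intermediate representation of the two divergences as averages of one function over one segment is a nice structural fact in its own right. What the paper's argument buys in exchange is fidelity to the literal hypothesis: the Lagrange remainder needs only that $F$ be twice differentiable, whereas the integral remainder and the mean value theorem for integrals additionally require $H_F$ to be continuous (or at least integrable) along $[p,q]$. You flag this extra regularity yourself, and it is harmless for every potential actually used downstream (in particular $F^*_\temp=\temp\,\logsumexp(\cdot/\temp)$ is smooth on all of $\Real^{|\calY|}$), but strictly speaking your argument proves the proposition under a mildly stronger assumption than stated.
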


\begin{proof} %{Proposition \ref{propn:prelim}}
Let $f(p)$ denote $\nabla F(p)$ and consider the midpoint
$\smallfrac{q+p}{2}$.  One can express $F(\midr)$ by two Taylor
expansions around $p$ and $q$. By Taylor's theorem there is an $\altr
= (1-\alpha)p + \alpha q$ for $0\leq\alpha\leq \smallfrac{1}{2}$ and
$\altaltr = \beta p + (1-\beta)q$ for $0\leq\beta\leq\smallfrac{1}{2}$
such that
\begin{align}
F(\midr)
& ~=~
F(p) + (\midr-p)^\top f(p)
+ \smallfrac{1}{2}(\midr-p)^\top H_F(\altr)(\midr-p)
\\
& ~=~
F(q) + (\midr-q)^\top f(q)
+ \smallfrac{1}{2}(\midr-q)^\top H_F(\altaltr)(\midr-q)
,
\\
\mbox{hence, }
\qquad
2F(\midr)
& ~=~
2F(p) + (q-p)^\top f(p)
+ \smallfrac{1}{4}(q-p)^\top H_F(\altr)(q-p)
\\
& ~=~
2F(q) + (p-q)^\top f(q)
+ \smallfrac{1}{4}(p-q)^\top H_F(\altaltr)(p-q)
.
\end{align}
Therefore,
\begin{align}
	F(p)+F(q)-2F(\midr)
	& ~=~
	F(p) - F(q) - (p-q)^\top f(q)
	- \smallfrac{1}{4}(p-q)^\top H_F(\altaltr)(p-q)
	\\
	& ~=~
	F(q) - F(p) - (q-p)^\top f(p)
	- \smallfrac{1}{4}(q-p)^\top H_F(\altr)(q-p)
	\\
	& ~=~
        \df{p}{q}
	- \smallfrac{1}{4}(p-q)^\top H_F(\altaltr)(p-q)
        \label{eq:quad1}
	\\
	& ~=~
        \df{q}{p}
	- \smallfrac{1}{4}(q-p)^\top H_F(\altr)(q-p),
        \label{eq:quad2}
\end{align}
leading to the result.
\end{proof}

For the proof of Proposition~\ref{propn:variance}, we first need to
introduce a few definitions and background results.  A Bregman
divergence is defined from a strictly convex, differentiable, closed
potential function $F:\domain\rightarrow\Real$, whose strictly convex
conjugate $F^*:\range\rightarrow\Real$ is given by
$F^*(\targetr)=\sup_{\targetr\in\domain}\inner{\targetr}{\targetp}-F(\targetp)$
\cite{banerjeeetal05}.  Each of these potential functions have
corresponding transfers, $f:\domain\rightarrow\range$ and
$f^*:\range\rightarrow\domain$, given by the respective gradient maps
$f=\nabla F$ and $f^*=\nabla F^*$.  A key property is that
$f^*=f^{-1}$ \cite{banerjeeetal05}, which allows one to associate each
object $\targetp\in\domain$ with its transferred image
$\targetr=f(\targetp)\in\range$ and vice versa.
%each object $\targetp\in\range$ can be associated with its preimage
%$\targetr=f^*(\targetp)\in\domain$.
The main property of Bregman divergences we exploit
is that a divergence between any two domain objects
can always be equivalently expressed as a divergence 
between their transferred images;
that is, for any $\anyp\in\domain$ and $\targetp\in\domain$,
one has
\cite{banerjeeetal05}:
\begin{align}
\df{\anyp}{\targetp}
&=
\hskip2mmF(\anyp) 
-\inner{\anyp}{\targetr} 
+ F^*(\targetr)
=
\dfs{\targetr}{\anyr}
,
\label{eq:bregman dual forward}
\\
\df{\targetp}{\anyp}
&=
F^*(\anyr)
-\inner{\anyr}{\targetp} + 
F(\targetp) 
\hskip2mm
=
\dfs{\anyr}{\targetr}
,
\label{eq:bregman dual reverse}
\end{align}
where $\anyr\!=\!f(\anyp)$ and $\targetr\!=\!f(\targetp)$.  These
relations also hold if we instead chose $\anyr\!\in\!\range$ and
$\targetr\!\in\!\range$ in the range space, and used
$\anyp\!=\!\softmax(\anyr)$ and $\targetp\!=\!\softmax(\targetr)$.  In general
\eqref{eq:bregman dual forward} and \eqref{eq:bregman dual reverse}
are not equal.

Two special cases of the potential functions $F$ and $F^*$ are
interesting as they give rise to KL divergences. These two cases
include $F_\temp\left(p\right) = -\temp\ent{p}$ and $F^*_\temp(s) =
\temp \logsumexp{(s/\temp)} = \temp \log \sum_y \exp{(s(y)/\temp)}$,
where $\logsumexp(\cdot)$ denotes the log-sum-exp operator. The
respective gradient maps are $f_\temp(p) = \temp(\log(p) + \vec{1})$
and $\softmax_\temp(s) = \softmax(s / \temp) = {\small \frac{1}{\sum_y\!\exp(s(y) /
    \temp)}}\exp(s / \temp)$, where $\softmax_\temp$ denotes the
normalized exponential operator for $\frac{1}{\temp}$-scaled
logits. Below, we derive $\dfst{\targetr}{\anyr}$ for such
$F^*_\temp$:
\begin{equation}
\begin{aligned}
\dfst{s}{r}
& =~~ F^*_\temp(s) - F^*_\temp(r) - \trans{(s-r)} \nabla F^*_\temp(r)\\
& =~~ \temp \logsumexp(s / \temp) - \temp \logsumexp(r / \temp) - \trans{(s-r)} \softmax_\temp(r) \\
& =~~ -\temp \trans{\big( \left({s}/{\temp} - \logsumexp(s / \temp) \right) - \left({r}/{\temp} - \logsumexp(r / \temp)\right) \big)} \softmax_\temp(r) \\
& =~~ \temp \trans{\softmax_\temp(r)} \big( 
\left({r}/{\temp} - \logsumexp(r / \temp)\right) - \left({s}/{\temp} - \logsumexp(s / \temp) \right)
 \big)\\
& =~~ \temp \trans{\softmax_\temp(r)}\big( \log \softmax_\temp(r) - \log \softmax_\temp(s) \big)\\
& =~~ \temp\kl{\softmax_\temp(r)}{\softmax_\temp(s)}\\
& =~~ \temp\kl{q}{p}
\end{aligned}
\label{eq:tempered-kl}
\end{equation}

\begin{proposition}
The KL divergence between $p$ and $q$ in two directions can be
expressed as,
\begin{eqnarray}
\!\!\!\!\!\!\!\!
\kl{p}{q}
&\!\!\! = \!\!\!&
\kl{q}{p}
+
\smallfrac{1}{4\temp^2}\;
\mathrm{Var}_{y\sim \softmax(\altr / \temp)}\left[\anyr(y) - \targetr(y) \right]
-
\smallfrac{1}{4\temp^2}\;
\mathrm{Var}_{y\sim \softmax(\altaltr / \temp)}\left[\anyr(y) - \targetr(y) \right]
\label{eq:prop2-equality}
\\
&\!\!\! < \!\!\!&
\kl{q}{p}
+
\smallfrac{1}{\tau^2}\| \anyr-\targetr \|_2^2
,
\label{eq:prop2-inequality}
\end{eqnarray}

for some 
$\altr=(1-\alpha)\anyr+\alpha\targetr$,~
($0\leq\alpha\leq \smallfrac{1}{2}$),~
$\altaltr=(1-\beta)\targetr+ \beta \anyr$,~
($0\leq\beta\leq \smallfrac{1}{2}$).
\label{propn:variance}
\end{proposition}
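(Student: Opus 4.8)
The plan is to obtain Proposition~\ref{propn:variance} by applying Proposition~\ref{propn:taylor} in the \emph{logit} space to the conjugate potential $F^*_\temp(u)=\temp\,\logsumexp(u/\temp)$, and then converting the resulting Bregman identity back into KL divergences via \eqref{eq:tempered-kl}. I would first note that the \emph{equality} in Proposition~\ref{propn:taylor} is proved purely from Taylor's theorem and hence needs only twice differentiability of the potential (strict convexity enters only in the unused nonnegativity claim), so it applies to $F^*_\temp$, which is smooth on $\Real^{|\calY|}$. Instantiating Proposition~\ref{propn:taylor} with $F\mapsto F^*_\temp$, $p\mapsto\anyr$, $q\mapsto\targetr$ then yields
\[
\dfst{\targetr}{\anyr}\;=\;\dfst{\anyr}{\targetr}\;+\;\smallfrac14\,\trans{(\anyr-\targetr)}\big(H_{F^*_\temp}(\altr)-H_{F^*_\temp}(\altaltr)\big)(\anyr-\targetr),
\]
with $\altr=(1-\alpha)\anyr+\alpha\targetr$ and $\altaltr=(1-\beta)\targetr+\beta\anyr$, $0\le\alpha,\beta\le\smallfrac12$ — exactly the interpolating points and ranges in the statement.

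Next I would translate both Bregman divergences into KL divergences. Since \eqref{eq:tempered-kl} was derived precisely for the gradient map $\nabla F^*_\temp=\softmax(\cdot/\temp)$, applying it in both argument orders gives $\dfst{\anyr}{\targetr}=\temp\kl{q}{p}$ and $\dfst{\targetr}{\anyr}=\temp\kl{p}{q}$, where $q=\softmax(\targetr/\temp)$ and $p=\softmax(\anyr/\temp)$. Substituting these into the displayed identity and dividing by $\temp$ produces $\kl{p}{q}=\kl{q}{p}+\smallfrac{1}{4\temp}\trans{(\anyr-\targetr)}\big(H_{F^*_\temp}(\altr)-H_{F^*_\temp}(\altaltr)\big)(\anyr-\targetr)$.

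It then remains to evaluate the quadratic forms. Differentiating $\nabla F^*_\temp(u)=\softmax(u/\temp)$ once more gives $H_{F^*_\temp}(u)=\smallfrac1\temp\big(\diag(\softmax(u/\temp))-\softmax(u/\temp)\trans{\softmax(u/\temp)}\big)$, so for any vector $v$ one has $\trans{v}H_{F^*_\temp}(u)v=\smallfrac1\temp\,\mathrm{Var}_{y\sim\softmax(u/\temp)}[v(y)]$. Taking $v=\anyr-\targetr$ and $u\in\{\altr,\altaltr\}$ turns the two quadratic forms into the two variance terms, yielding the equality \eqref{eq:prop2-equality}. For the bound \eqref{eq:prop2-inequality} I would drop the nonnegative variance term at $\altaltr$ and estimate the remaining one by $\mathrm{Var}_{y\sim\mu}[v(y)]\le\sum_y\mu(y)v(y)^2\le\|v\|_2^2$ (using $\mu(y)\le1$), so that $\smallfrac{1}{4\temp^2}\mathrm{Var}\le\smallfrac{1}{4\temp^2}\|\anyr-\targetr\|_2^2<\smallfrac{1}{\temp^2}\|\anyr-\targetr\|_2^2$.

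The one point requiring care is the bookkeeping in the translation step: because the transfer map $f^*=\nabla F^*_\temp$ reverses the order of the two arguments of a Bregman divergence, one must track that it is $\dfst{\targetr}{\anyr}$ (not $\dfst{\anyr}{\targetr}$) that equals $\temp\kl{p}{q}$, and likewise verify that the convex-combination coefficients emitted by Proposition~\ref{propn:taylor} land in $[0,\smallfrac12]$ as claimed. The Hessian-of-log-sum-exp computation is routine but is the substantive ingredient that turns the abstract Bregman identity into the explicit difference-of-variances formula.
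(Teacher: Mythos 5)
Your proof is correct. For the equality \eqref{eq:prop2-equality} it follows essentially the same route as the paper: instantiate Proposition~\ref{propn:taylor} with the conjugate potential $F^*_\temp$ in logit space, identify the quadratic forms in the log-sum-exp Hessian $H_{F^*_\temp}(u)=\smallfrac{1}{\temp}\big(\diag(\softmax_\temp(u))-\softmax_\temp(u)\trans{\softmax_\temp(u)}\big)$ as $\smallfrac{1}{\temp}$ times variances, and convert the two Bregman divergences back to KL divergences via \eqref{eq:tempered-kl}, taking care of the argument reversal under the transfer map exactly as you do. Your treatment of the inequality \eqref{eq:prop2-inequality}, however, genuinely differs from the paper's. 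The paper expands $\trans{\delta}\big(H_{F^*_\temp}(\altr)-H_{F^*_\temp}(\altaltr)\big)\delta$ into a diagonal term plus a difference of squared means and bounds the pieces separately using $\|\softmax_\temp(\altr)-\softmax_\temp(\altaltr)\|_\infty\leq 2$ and $\|\softmax_\temp(\altaltr)\|_2\leq 1$, arriving at the constant $\smallfrac{3}{4\temp^2}$. You instead discard the subtracted (nonnegative) variance at $\altaltr$ and bound the remaining variance by its second moment, $\mathrm{Var}_{y\sim\mu}[v(y)]\leq\sum_y\mu(y)v(y)^2\leq\|v\|_2^2$, which is shorter and yields the sharper constant $\smallfrac{1}{4\temp^2}$; both are below the stated $\smallfrac{1}{\temp^2}$. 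A further point in your favor: you explicitly note that the equality in Proposition~\ref{propn:taylor} requires only twice differentiability, which matters because $F^*_\temp$ is convex but \emph{not} strictly convex (it is affine along the all-ones direction), so the paper's claim that $F^*_\temp$ ``satisfies the conditions'' of Proposition~\ref{propn:taylor} is slightly loose, whereas your reading closes that gap. (Both your argument and the paper's leave the strictness of \eqref{eq:prop2-inequality} implicitly contingent on $\anyr\neq\targetr$.)
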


\begin{proof}
First, for the potential function 
$F^*_\temp(\targetr) =\temp \logsumexp(\targetr/\temp)$
it is easy to verify that $F^*_\temp$ satisfies the conditions for
Proposition~\ref{propn:taylor}, and
\begin{equation}
H_{F^*_\temp}(\altr)=\smallfrac{1}{\temp} (\diag(\softmax_\temp(\altr))
-\softmax_\temp(\altr)\softmax_\temp(\altr)^\top)~,
\label{eq:F*hess}
\end{equation}
where $\diag(\vec{v})$ returns a square matrix the main diagonal of
which comprises a vector $\vec{v}$.
Therefore, by Proposition~\ref{propn:taylor} we obtain
\begin{equation}
\dfst{r}{s}
=
\dfst{s}{r}
+
\smallfrac{1}{4}\;
(s-r)^\top
(H_{F^*_\temp}(a)-H_{F^*_\temp}(b))
(s-r)
~,
\label{eq:prop2-1}
\end{equation}
for some $\altr=(1-\alpha)\anyr+\alpha\targetr$,~
($0\leq\alpha\leq \smallfrac{1}{2}$),~
$\altaltr=(1-\beta)\targetr+ \beta \anyr$,~
($0\leq\beta\leq \smallfrac{1}{2}$).
Note that by the specific form \eqref{eq:F*hess} we also have
\begin{align}
(\anyr-\targetr)^\top H_{F^*_\temp}(\altr)(\anyr-\targetr)
& =
\smallfrac{1}{\temp}
(\anyr-\targetr)^\top
\big(\diag(\softmax_\temp(\altr))
-\softmax_\temp(\altr)\softmax_\temp(\altr)^\top\big)
(\anyr-\targetr)
\label{eq:H}
\\
& =
\smallfrac{1}{\temp}
\big(
E_{\by\sim \softmax_\temp(\altr)}\left[(\anyr(\by)-\targetr(\by))^2\right]
-
E_{\by\sim \softmax_\temp(\altr)}\left[\anyr(\by)-\targetr(\by)\right]^2
\big)
\\
& =
\smallfrac{1}{\temp}
\mathrm{Var}_{\by\sim \softmax_\temp(\altr)}\left[\anyr(\by)-\targetr(\by)\right]\
,
\label{eq:var-hess1}
\\
\mbox{ and }
(\anyr-\targetr)^\top H_{F^*_\temp}(b)(\anyr-\targetr)
&=
\smallfrac{1}{\temp}
\mathrm{Var}_{\by\sim \softmax_\temp(b)}\left[\anyr(\by)-\targetr(\by)\right]\
\label{eq:var-hess2}
~.
\end{align}
Therefore, by combining \eqref{eq:var-hess1} and \eqref{eq:var-hess2} with 
\eqref{eq:prop2-1} we obtain
\begin{align}
\dfst{r}{s}
=
\dfst{s}{r}
+
\smallfrac{1}{4\temp}\;
\mathrm{Var}_{\by\sim \softmax_\temp(\altr)}\left[\anyr(\by)-\targetr(\by)\right]\
-
\smallfrac{1}{4\temp}\;
\mathrm{Var}_{\by\sim \softmax_\temp(\altaltr)}\left[\anyr(\by)-\targetr(\by)\right]\
~.
\label{eq:prop2-pre}
\end{align}
Equality \eqref{eq:prop2-equality} then follows by applying 
\eqref{eq:tempered-kl} to \eqref{eq:prop2-pre}.

Next, to prove the inequality in \eqref{eq:prop2-inequality},
let $\delta=\anyr-\targetr$ and observe that
\begin{align}
\dfst{r}{s}
-
\dfst{s}{r}
& =
\smallfrac{1}{4}
\delta^\top \big(H_{F^*_\temp}(\altr)-H_{F^*_\temp}(\altaltr)\big) \delta
\\
& = 
\smallfrac{1}{4\temp}
\delta^\top\diag(\softmax_\temp(\altr)-\softmax_\temp(\altaltr))\delta
+
\smallfrac{1}{4\temp}
\big(\delta^\top \softmax_\temp(\altaltr)\big)^2
-
\smallfrac{1}{4\temp}
\big(\delta^\top \softmax_\temp(\altr)\big)^2
\\
& \leq
\smallfrac{1}{4\temp}
\|\delta\|_2^2\|\softmax_\temp(\altr)-\softmax_\temp(\altaltr)\|_\infty
+
\smallfrac{1}{4\temp}
\|\delta\|_2^2\|\softmax_\temp(\altaltr)\|_2^2
\\
& \leq
\smallfrac{1}{2\temp}
\|\delta\|_2^2
+
\smallfrac{1}{4\temp}
\|\delta\|_2^2
\label{eq:prop2-fin}
\end{align}
since
$\|\softmax_\temp(\altr)-\softmax_\temp(\altaltr)\|_\infty\leq2$
and
$\|\softmax_\temp(\altaltr)\|_2^2\leq\|\softmax_\temp(\altaltr)\|_1^2\leq1$.
The result follows by applying \eqref{eq:tempered-kl} to \eqref{eq:prop2-fin}.
\end{proof}

\end{document}